\newtheorem{theorem}{Theorem}
\newcommand{\cmark}{\ding{51}}%
\newcommand{\xmark}{\ding{55}}%
\begin{document}

\title{HeterCSI: Channel-Adaptive Heterogeneous CSI Pretraining Framework for Generalized Wireless Foundation Models}

\author{Chenyu Zhang, Xinchen Lyu, Chenshan Ren, Shuhan Liu, Qimei Cui, and Xiaofeng Tao
    \thanks{This paper was supported in part by the National Natural Science Foundation of China under Grants 62371059 and 62301622.}
    \thanks{C. Zhang is with National Engineering Research Center for Mobile Network
    Technologies, Beijing University of Posts and Telecommunications, Beijing
    100876, China (e-mail: zhangchenyu2024@bupt.edu.cn).}
    \thanks{X. Lyu, Q. Cui, X. Tao are with National Engineering Research
    Center for Mobile Network Technologies, Beijing University of Posts
    and Telecommunications, Beijing 100876, China, and also with the Department of Broadband Communication, Pengcheng Laboratory, Shenzhen
    518055, China (e-mail: lvxinchen@bupt.edu.cn; 
    cuiqimei@bupt.edu.cn; taoxf@bupt.edu.cn).}
    \thanks{C. Ren is with the Key Laboratory of Ethnic Language Intelligent Analysis and Security Governance of MOE, Minzu University of China, Beijing 100081, China (e-mail: renchenshan06@163.com).}
    \thanks{S. Liu is with China Telecom Corporation Limited Gansu Branch, Gansu 730000, China (e-mail: liush20@chinatelecom.cn).}
    }

\markboth{Journal of \LaTeX\ Class Files,~Vol.~14, No.~8, August~2021}%
{Shell \MakeLowercase{\textit{et al.}}: A Sample Article Using IEEEtran.cls for IEEE Journals}

\maketitle
\begin{abstract}
    Wireless foundation models promise transformative capabilities for channel state information (CSI) processing across diverse 6G network applications, yet face fundamental challenges due to the inherent dual heterogeneity of CSI across both scale and scenario dimensions. However, current pretraining approaches either constrain inputs to fixed dimensions or isolate training by scale, limiting the generalization and scalability of wireless foundation models. In this paper, we propose \textbf{HeterCSI}, a channel-adaptive pretraining framework that reconciles training efficiency with robust cross-scenario generalization via  a new understanding of gradient dynamics in heterogeneous CSI pretraining. Our key insight reveals that CSI scale heterogeneity primarily causes destructive gradient interference, while scenario diversity actually promotes constructive gradient alignment when properly managed. Specifically, we formulate heterogeneous CSI batch construction as a partitioning optimization problem that minimizes zero-padding overhead while preserving scenario diversity. To solve this, we develop a scale-aware adaptive batching strategy that aligns CSI samples of similar  scales, and design a double-masking mechanism to isolate valid signals from padding artifacts. Extensive experiments on 12 datasets demonstrate that HeterCSI establishes a generalized foundation model without scenario-specific finetuning, achieving superior average performance over full-shot baselines. Compared to the state-of-the-art zero-shot benchmark WiFo, it reduces NMSE by 7.19 dB, 4.08 dB, and 5.27 dB for CSI reconstruction, time-domain, and frequency-domain prediction, respectively.  The proposed HeterCSI framework also reduces training latency by 53\% compared to existing approaches while improving generalization performance by 1.53 dB on average.
\end{abstract}
\begin{IEEEkeywords}
Wireless foundation model, channel state information, channel prediction, CSI reconstruction
\end{IEEEkeywords}

\IEEEpeerreviewmaketitle

\section{Introduction}
\IEEEPARstart{T}{he} evolution toward sixth-generation (6G) mobile networks, characterized by massive multiple-input multiple-output (MIMO) systems, ultra-wide bandwidths, and highly dynamic environments, demands unprecedented channel prediction capabilities~\cite{gao2025enabling}. Channel State Information (CSI) prediction (i.e., estimating complete channel responses from partial observations across temporal, frequency, and spatial domains) has emerged as a critical enabler for reducing signaling overhead while maintaining communication reliability~\cite{zhang2023ai}. However, the increasing complexity and diversity of wireless scenarios present a fundamental challenge: developing foundation models that generalize robustly across heterogeneous environments while maintaining high prediction accuracy~\cite{akrout2023domain}.

Recent advances in foundation models have demonstrated remarkable generalization capabilities in natural language processing and computer vision~\cite{hong2024spectralgpt}. This success has inspired efforts to develop wireless foundation models for channel-related tasks. Current approaches fall into two categories: (1) Adapting pre-trained language or vision models (e.g., GPT) to wireless tasks via fine-tuning~\cite{liu2024llm4cp,liu2025llm4wm,guo2025lvm4csi,sun2025llm4pg}, and (2) Pretraining channel-specific models from scratch on large-scale CSI datasets~\cite{liu2025wifo,guler2025multi,alikhani2024large,yu2024channelgpt,pan2025large,yang2025wirelessgpt,catak2025bert4mimo,zhang2026wifo}. These wireless foundation models have been recently extended to multiple air-interface tasks, including channel prediction~\cite{liu2025wifo}, LoS/NLoS classification~\cite{alikhani2024large}, beamforming~\cite{liu2025llm4wm}, and wireless positioning~\cite{pan2025large}. 


\begin{figure*}[t]
    \centering
    \subfloat[Heterogeneous CSI]{%
        \includegraphics[width=0.42\linewidth]{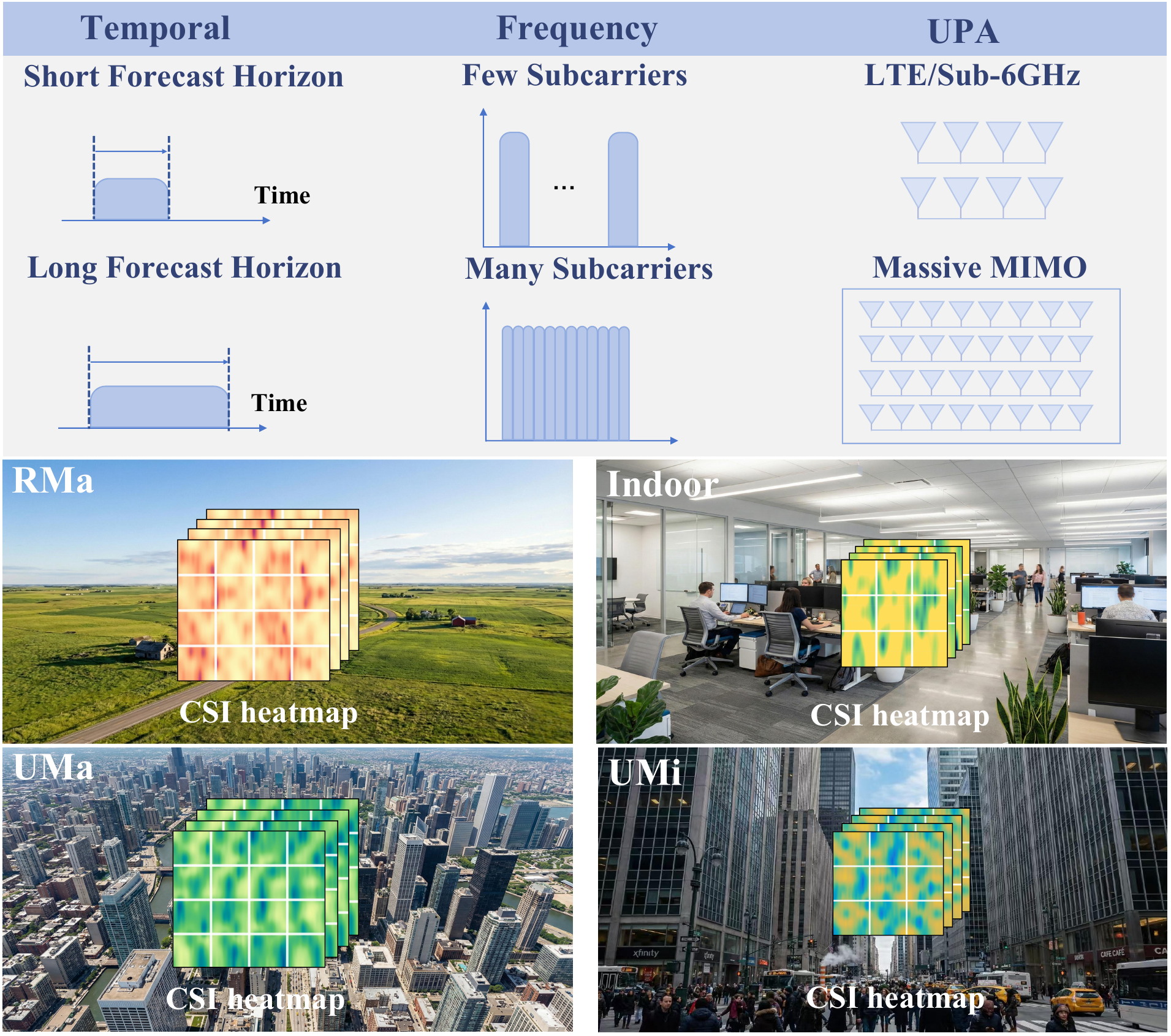}%
        \label{fig:heterogeneous_csi}%
    }%
    \hfill
    \subfloat[Gradient Interaction]{%
        \includegraphics[width=0.55\linewidth]{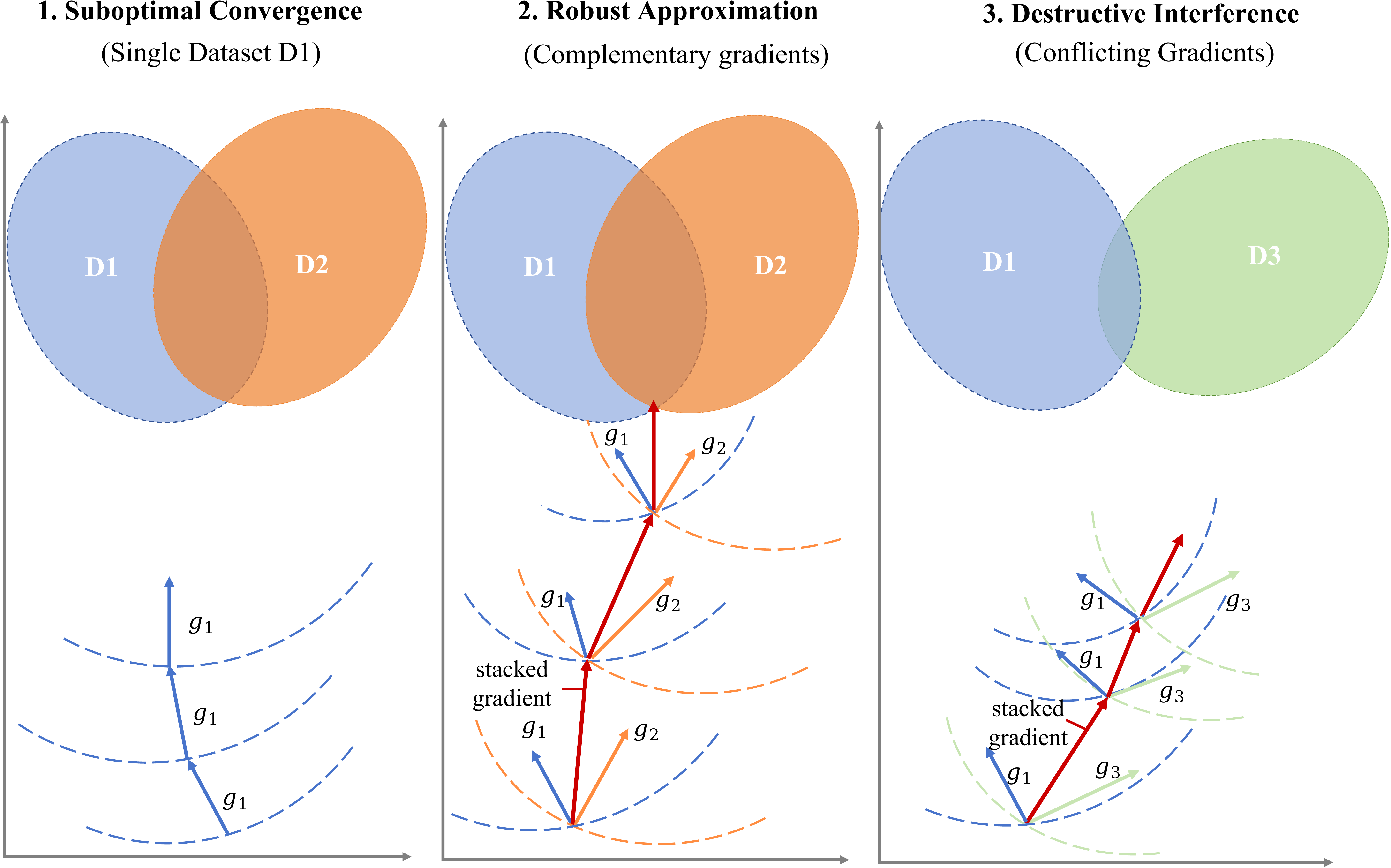}%
        \label{fig:gradient_interaction}%
    }%
    
    \vspace{1em} 
    
    \subfloat[Gradient Cosine Similarity Distributions]{%
        \includegraphics[width=0.32\linewidth]{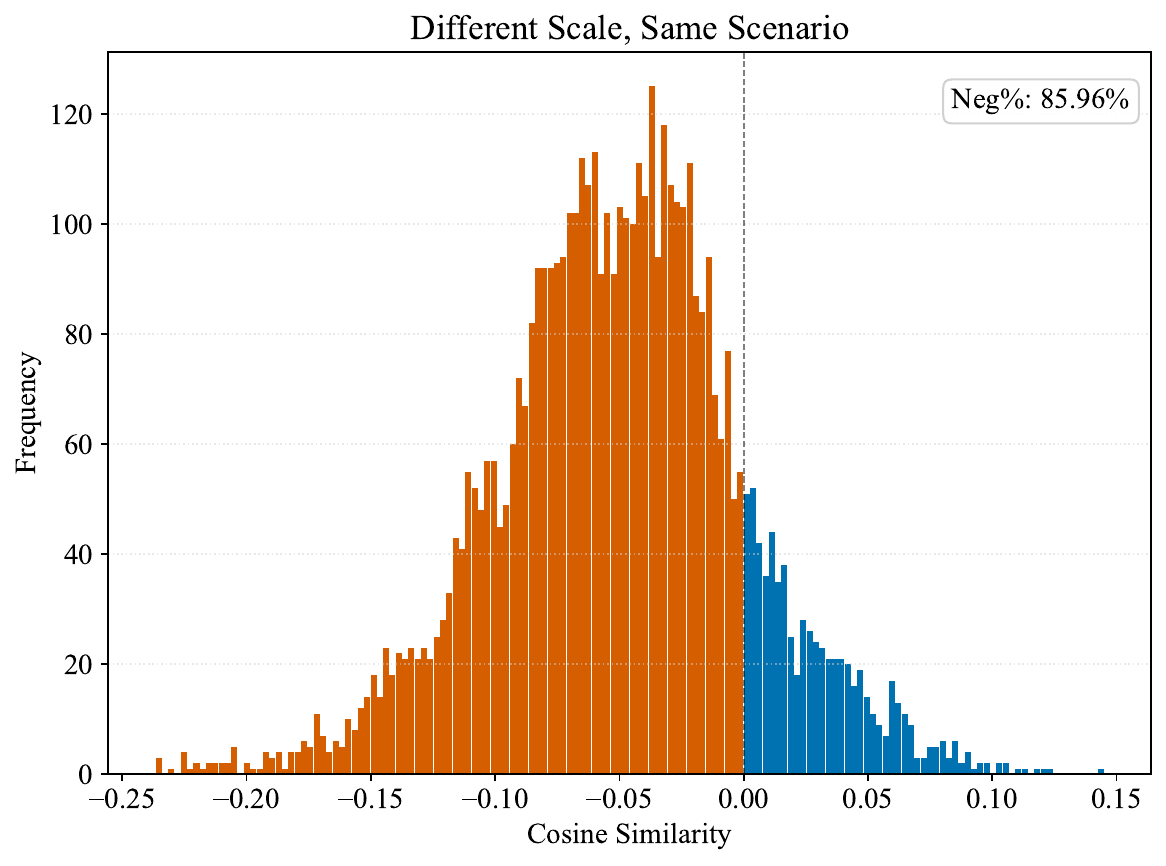}%
        \hfill
        \includegraphics[width=0.32\linewidth]{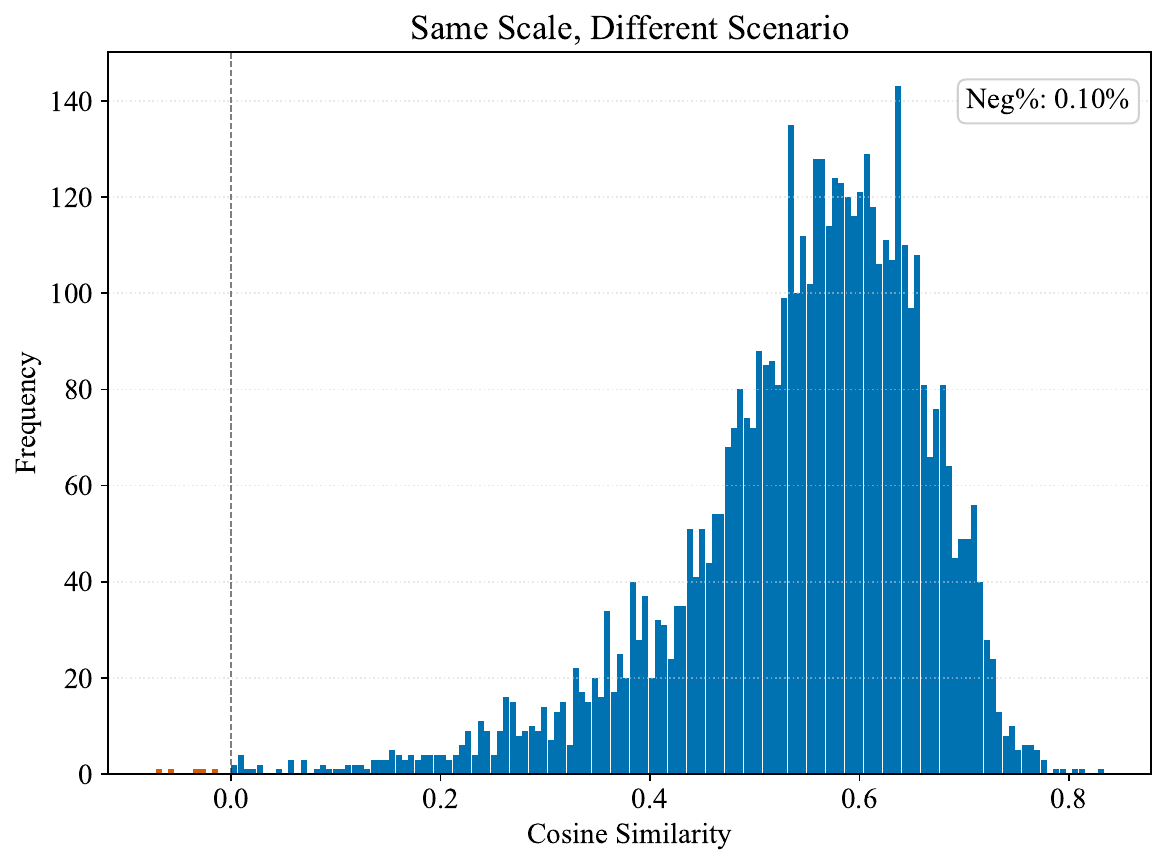}%
        \hfill
        \includegraphics[width=0.32\linewidth]{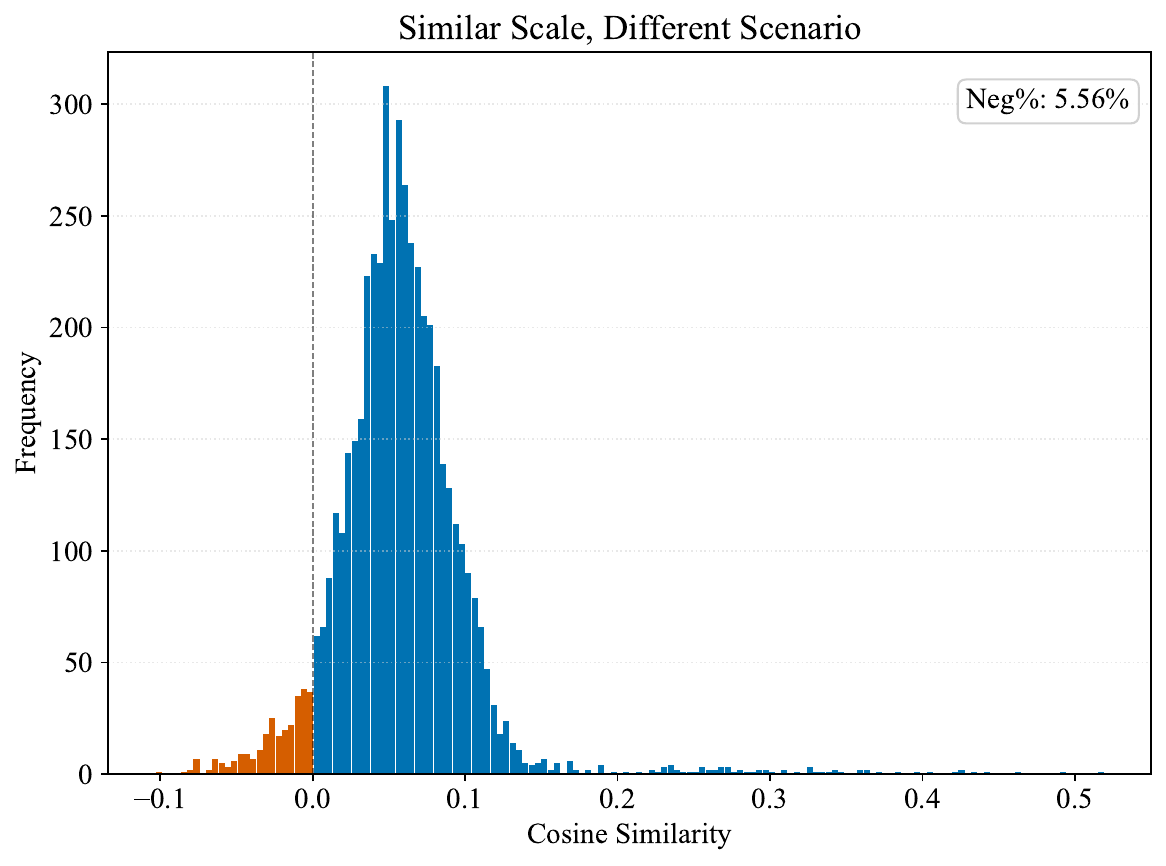}%
        \label{fig:gradient_histograms}%
    }%
    
    \caption{
    (a) CSI varies significantly across scales ($T, K, U$) and propagation scenarios. 
    (b) Jointly leveraging heterogeneous CSI aggregates complementary gradients ($g_1, g_2$) to guide optimization toward a robust global optimum, while conflicting gradients ($g_1, g_3$) lead to destructive interference. 
    (c) Similarity statistics confirm that scale discrepancy is the primary source of gradient conflict (left: scale heterogeneity), while scenario diversity maintains positive alignment when scales are compatible (middle and right).
    }
    \label{fig:motivation_combined}
\end{figure*}

\subsection{Challenge: CSI Heterogeneity and Gradient Conflict}
The challenge stems from the inherent dual heterogeneity embedded in real-world CSI data. As illustrated in Fig. 1(a), CSI exhibits variations across two orthogonal dimensions: \textit{(1) Scale Heterogeneity}, where tensor dimensions vary significantly due to different temporal horizons, subcarrier numbers, and antenna configurations (e.g.,  sub-6GHz systems with small antenna arrays versus millimeter-wave with massive MIMO); and \textit{(2) Scenario Heterogeneity}, where statistical distributions undergo substantial shifts across diverse environments (e.g., RMa, Indoor, UMa, UMi)~\cite{3gpp38901}. 

Such dual heterogeneity interacts with each other and creates conflicting optimization objectives during model training. As shown in Fig. 1 (b),  training on a single dataset often drives optimization toward suboptimal local minima. While jointly leveraging heterogeneous CSI is theoretically beneficial for aggregating complementary gradient directions toward a robust global optimum, naively mixing and shuffling multi-scale CSI data introduces a critical optimization instability: severe gradient conflict. This conflict causes orthogonal or opposing optimization directions, severely hindering the convergence of a generalized CSI model. The core research problem is therefore twofold: \textit{(1) Which factor (scale or scenario heterogeneity) is dominant in optimization instability preventing generalized CSI training? (2) How can we design a channel-adaptive mechanism to align gradients for robust cross-scale CSI pretraining?}

\subsection{Motivation and Design Rationale}
We conducted an in-depth analysis of gradient interactions under different data composition strategies. Our investigation revealed a crucial insight: \textit{CSI scale heterogeneity is the dominant source of destructive gradient interference}, while scenario diversity actually facilitates constructive gradient accumulation when scale differences are properly managed. As shown in Fig. 1(c), when samples with different CSI scales are batched together,  $85.96\%$ of gradient pairs exhibit negative cosine similarity. In contrast, when samples share identical or similar scales, the negative proportion drops dramatically to $0.01\%$ or $5.56\%$, respectively, even across diverse propagation scenarios~(e.g., RMa/UMa and UMi/Indoor).

This finding fundamentally reshapes the design rationale. Rather than treating both scale and scenario heterogeneity as detrimental, we recognize that scenario diversity is inherently beneficial for generalization, and scale heterogeneity is the primary obstacle to overcome. Based on this insight, we propose to group CSI samples into buckets according to their sizes. Within each scale-homogeneous bucket, we preserve the beneficial diversity of propagation scenarios. By stochastically sampling batches from different buckets during training, we maintain global data diversity while preventing catastrophic forgetting. This dual-level optimization strategy, i.e., enforcing intra-batch scale homogeneity while preserving inter-batch scenario diversity, can both minimize zero-padding overhead (i.e., the additional training overhead of adding zero elements to reach the same size) and align gradient directions to prevent destructive interference.

\subsection{Contribution and Paper Organization}

To establish a generalized wireless foundation model leveraging massive heterogeneous CSI, this paper proposes \textbf{HeterCSI}, a channel-adaptive pretraining framework designed to reconcile training efficiency with robust generalization. Our key insight stems from that CSI scale heterogeneity induces destructive gradient interference, whereas scenario diversity intrinsically facilitates constructive gradient alignment. Based on this, we formulate heterogeneous CSI batch construction as an optimization problem aimed at minimizing zero-padding while preserving scenario diversity. To address this, we propose a bucket-based sorting strategy to efficiently obtain an approximate solution.

The key contributions are as follows:
\begin{itemize}
    \item We reveal that scale heterogeneity is the dominant source of gradient conflict, whereas scenario diversity promotes convergence. This insight shifts the training paradigm from randomly data shuffling to active scale alignment, effectively decoupling convergence-impeding scale heterogeneity from beneficial scenario variations.
    \item We formulate heterogeneous CSI batch construction as a partitioning optimization problem aimed at minimizing zero-padding overhead subject to scenario diversity constraints. We efficiently solve this problem by introducing a scale-aware adaptive batching strategy that aligns samples of similar scales.
    \item We develop the HeterCSI framework, which integrates scale-aware adaptive batching strategy with double-masking mechanism. The former aligns samples with similar scales to alleviate gradient conflicts, while the latter leverages asymmetric attention masking to isolate valid signals from padding and to enable effective self-supervised learning.
\end{itemize}

Extensive experiments on diverse datasets generated via QuaDRiGa demonstrate the superiority of our framework from three perspectives: (1) \textit{Generalization}: Our method establishes a generalized foundation model without scenario-specific finetuning, outperforming full-shot baselines on average across 12 unseen zero-shot scenarios. Compared to the state-of-the-art zero-shot benchmark WiFo, it significantly reduces the average NMSE by 7.19~dB, 4.08~dB, and 5.27~dB for CSI reconstruction, time-domain prediction, and frequency-domain prediction tasks, respectively; (2) \textit{Scalability}: Evaluations with increasing training scales (expanding from 8 to 40 datasets) confirm that generalization performance improves consistently with data volume, while improving generalization by an average of 1.53 dB; and (3) \textit{Efficiency}: The proposed scale-aware adaptive batching strategy reduces training time by approximately 53\% compared to global shuffling. The source code is available at the GitHub repository \url{https://github.com/zcy8998/HeterCSI/}.

The rest of this paper is organized as follows. Section 2 reviews related work and analyzes the limitations of existing methods. Section 3 formulates and analyzes the padding minimization problem subject to the scenario diversity. Section 4 details the proposed HeterCSI framework. Section 5 validates the zero-shot performance and scalability of our approach, and Section 6 concludes the paper.

\begin{table*}[!t]
    \renewcommand{\arraystretch}{1.25} 
    \setlength{\tabcolsep}{0pt} 
    \caption{Comparative Analysis of Wireless Foundation Models in Terms of Training Paradigm, CSI Tasks, and Generalization Capabilities}
    \label{tab:relatework}
    \centering
    \footnotesize
    
    \definecolor{Gray}{gray}{0.9}
  
    \begin{tabular*}{\textwidth}{@{\extracolsep{\fill}}l l ccc cc}
      \toprule
      \multirow{2}{*}{\textbf{Method}} & \multicolumn{1}{c}{\textbf{Training Paradigm}} & \multicolumn{3}{c}{\textbf{CSI Task Support}} & \multicolumn{2}{c}{\textbf{Generalization}} \\
      \cmidrule(lr){2-2} \cmidrule(lr){3-5} \cmidrule(lr){6-7}
       & \multicolumn{1}{c}{(Methodology \& Architecture)} & Recon. & Time Pred. & Freq. Pred. & Freq. & Scenario \\
      \midrule
      
      LLM4CP~\cite{liu2024llm4cp}            & Fine-Tuning (Fixed-scale, Single Scenario)     & \xmark & \cmark & \xmark & \cmark & \cmark \\
      LLM4WM~\cite{liu2025llm4wm}            & Fine-Tuning (Fixed-scale, Single Scenario)     & \xmark & \cmark & \cmark & \cmark & \cmark \\
      LVM4CSI~\cite{guo2025lvm4csi}          & Fine-Tuning (Fixed-scale, Single Scenario)               & \cmark & \xmark & \xmark & \xmark & \xmark \\
      LWM~\cite{alikhani2024large}           & Fixed-scale and Single Scenario Specific Pretraining & \cmark & \xmark & \xmark & \xmark & \cmark \\
      LWLM~\cite{pan2025large}               & Fixed-scale and Single Scenario Specific Pretraining & \cmark & \xmark & \xmark & \cmark & \xmark \\
      WiMAE~\cite{guler2025multi}            & Fixed-scale and Single Scenario Specific Pretraining & \cmark & \xmark & \xmark & \xmark & \cmark \\
      ChannelGPT~\cite{yu2024channelgpt}     & Fixed-scale and Single Scenario Specific Pretraining & \cmark & \cmark & \xmark & \xmark & \cmark \\
      WirelessGPT~\cite{yang2025wirelessgpt} & Fixed-scale and Single Scenario Specific Pretraining & \cmark & \cmark & \xmark & \cmark & \cmark \\
      BERT4MIMO~\cite{catak2025bert4mimo}    & Fixed-scale and Single Scenario Specific Pretraining & \cmark & \xmark & \cmark & \xmark & \cmark \\
      WiFo~\cite{liu2025wifo}                & Scale- and Scenario-isolated
      Pretraining   & \cmark & \cmark & \cmark & \cmark & \cmark \\
      \midrule
      
      \rowcolor{Gray}
      \textbf{Proposed} & \textbf{Cross Scale and Scenario Shuffling Pre-Training} & \cmark & \cmark & \cmark & \cmark & \cmark \\
      \bottomrule
    \end{tabular*}
\end{table*}

\section{Related Work}
This section reviews the evolution of CSI prediction from small models to emerging wireless foundation models, highlighting their limitations. We also review the impact of batch composition and shuffling strategies on generalization, and the challenges posed by heterogeneous CSI data. Addressing these gaps, our work is the first to enable cross-scale shuffled pretraining, paving the way for a generalized wireless foundation model.

\subsection{Conventional and Small-Model Methods}
Traditional CSI prediction methods rely on analytical channel models that capture physical phenomena such as multipath propagation, fading, and noise~\cite{li2025bridging}.  Classical estimators, including least-squares (LS)~\cite{hussein2023least}, minimum mean square error (MMSE)~\cite{bacci2024mmse}, Prony-based techniques~\cite{yin2020addressing}, and Kalman filtering~\cite{kim2020massive}, have been widely adopted for channel estimation and prediction. While effective in low-mobility, sparse-channel environments, these methods suffer from strong dependence on idealized assumptions and fail to capture the nonlinear, high-dimensional dynamics of modern massive MIMO systems.

To overcome these limitations, data-driven AI-based methods have emerged. Neural network architectures such as CNNs~\cite{xia2019deep}, LSTMs~\cite{jiang2020deep}, GRUs~\cite{helmy2023lstm}, and Transformers~\cite{jiang2022accurate} learn spatio-temporal patterns directly from CSI data, enabling end-to-end prediction without explicit channel modeling. However, these models typically require large amounts of task-specific training data and exhibit poor generalization to unseen scenarios (e.g., different frequencies, array sizes, or mobility patterns). As a result, they remain impractical for dynamic 6G deployments where channel conditions vary widely, and zero-shot prediction capability is of paramount importance.

\subsection{Wireless Channel Foundation Models}
Recent efforts have explored foundation models for wireless communications, aiming to learn universal representations of channel behavior. 

\subsubsection{Adapting Pre-trained Models for Wireless Tasks}
This category adapts pretrained language or vision models (e.g., GPT, ViT) to wireless tasks via fine-tuning~\cite{liu2024llm4cp,liu2025llm4wm,guo2025lvm4csi,sun2025llm4pg}. For example, LLM4CP~\cite{liu2024llm4cp} freezes a GPT-2 backbone and transfers knowledge to CSI prediction through adapter modules. LVM4CSI~\cite{guo2025lvm4csi} treats CSI as an image and leverages pretrained vision transformers (ViTs). While promising, these methods face a modality gap: CSI is complex-valued and structured across space–time–frequency, unlike natural text or RGB images, limiting transfer effectiveness.

\subsubsection{Pretraining from Scratch}
This category pretrains channel-specific foundation models from scratch on large-scale CSI datasets~\cite{liu2025wifo,guler2025multi,alikhani2024large,yu2024channelgpt,pan2025large,yang2025wirelessgpt,catak2025bert4mimo,zhang2026wifo}. WiFo~\cite{liu2025wifo} is a notable example: it supports multi-scale CSI inputs and unifies time- and frequency-domain prediction under a single architecture. However, WiFo trains on each scale in isolation and does not perform cross-dataset shuffling during pretraining. Other models like WirelessGPT~\cite{yang2025wirelessgpt}, ChannelGPT~\cite{yu2024channelgpt}, and BERT4MIMO~\cite{catak2025bert4mimo} further assume fixed-size inputs or homogeneous training data. Table~\ref{tab:relatework} provides a comparative analysis of channel foundation models from the perspectives of training paradigm, CSI tasks, and generalization capabilities.

\subsection{Batching and Shuffling for Heterogeneous CSI}

\subsubsection{Batch Optimization and Active Sampling}
The composition and shuffling strategy of training mini-batches critically shape model convergence and generalization. Early studies attribute the generalization advantage of small-batch SGD to gradient noise that biases optimization toward flat minima~\cite{keskar2016large}. More recent work shows that increasing dataset volumes reshapes the loss landscape, enabling large-batch training to converge to sharp yet well-generalizing minima~\cite{fan2025sharp}. The shuffling mechanism ensures randomness in data access, preventing the model from memorizing specific order of the training data and thereby promoting better generalization~\cite{hardt2016train}. However, in multi-task settings, naive batch construction often induces gradient conflict due to misaligned task-specific gradients, leading to destructive interference and degraded optimization~\cite{shi2024conflict}. 

To enhance training efficiency and representation quality, current training pipelines increasingly transition from uniform sampling to adaptive, data-centric batch construction strategies. Recent advances in Large Language Model (LLM) adopt quality-aware data selection, which dynamically filters instructions based on complexity and educational value to accelerate convergence~\cite{li2024quantity}. In computer vision, contrastive frameworks employ adaptive curriculum learning that progressively introduces samples with higher semantic uncertainty to refine feature discrimination without destabilizing the training process~\cite{wang2023comprehensive}. Furthermore, to mitigate gradient conflict in multi-task learning, current methods have transitioned from expensive gradient projections to efficient multi-objective optimization schemes that balance task-specific gradients via conflict-free aggregation~\cite{shi2024conflict}. Despite these advances, these strategies are primarily tailored to address semantic complexity or optimization imbalances and lack the necessary mechanisms to handle the dual heterogeneity inherent in CSI.

\subsubsection{Heterogeneity in Channel Foundation Models}
Training generalized channel foundation models requires organizing heterogeneous CSI into batches with unified scale. Standard training pipelines for LLMs commonly rely on sequence packing~\cite{brown2020language}, padding~\cite{vaswani2017attention}, and truncation~\cite{devlin2018bert} to accommodate variable-length inputs. However, directly transferring these strategies to CSI is non-trivial. While truncation and packing violate physical integrity by discarding essential signal components~\cite{bajwa2010compressed} or disrupting delay–Doppler correlations, naive zero-padding is equally ill-suited due to the extreme scale heterogeneity of wireless channels. For instance, in 5G NR, the maximum number of Channel State Information–Reference Signal (CSI-RS) antenna ports is 32, which determines the amount of CSI generated; whereas in 6G architectures, up to 256 antenna ports are expected to be deployed~\cite{samsung2025_6g_mimo}.

As illustrated in the \textbf{Training Paradigm} of Table I, most existing methods~\cite{liu2024llm4cp,liu2025llm4wm,guler2025multi,alikhani2024large,yu2024channelgpt,pan2025large,yang2025wirelessgpt,catak2025bert4mimo} assume fixed-size CSI inputs or train on scale-homogeneous data, limiting their adaptability to real-world heterogeneity. A notable recent effort, WiFo~\cite{liu2025wifo}, recognizes the need for multi-scale support and trains a single model on CSI with varying $T$, $K$, and $N$. In this paper, we further enhance representation learning by enabling multi-scale interactions within training batches. While WiFo is capable of handling multi-scale data, its training paradigm remains scale- and scenario-isolated training, where each mini-batch is restricted to a single dataset. In contrast, we reveal that scale heterogeneity is the dominant source of gradient conflict, whereas scenario diversity promotes convergence. This insight shifts the training paradigm from randomly data shuffling to active scale alignment and allows for joint exposure to diverse CSI scales, unlocking the full potential of generalized foundation models.

\section{Cross-Scale Pretraining Challenges and Problem Formulation}
In this section, we first analyze the mathematical characteristics of heterogeneous CSI data. By explicitly linking scale diversity to padding-induced inefficiency and gradient interference, we formulate a batch construction problem to reconcile training efficiency with robust generalization.

\subsection{Scale Heterogeneity of CSI}
\label{subsec:physical_heterogeneity}

The evolution of wireless standards, including 4G LTE, 5G (Sub-6 GHz, mmWave), and visionary 6G (Sub-THz, THz) systems—introduces profound heterogeneity in frequency bands and antenna configurations~\cite{3gpp_ts38_101_1,itu_r_imt2030}. Furthermore, intra-standard flexibility allows for diverse carrier bandwidths, subcarrier spacings, and array sizes. Collectively, these variations fundamentally govern the multi-scale nature of CSI data along the frequency, temporal, and spatial dimensions:

\begin{itemize}
    \item \textbf{Frequency Domain ($K$):} Given a fixed subcarrier spacing, the number of subcarriers $K$ scales linearly with bandwidth~\cite{dahlman20205g}. Deployment scenarios range from narrowband sub-6GHz systems to ultra-wideband mmWave or sub-THz systems, creating significant variance in spectral dimensionality.
    \item \textbf{Spatial Domain ($A$):} The antenna array size $A = A_h \times A_v$ varies from small-scale arrays in user equipment to massive MIMO configurations (e.g., 256 ports) in base stations.
    \item \textbf{Temporal Domain ($T$):} The observation window $T$ required to capture channel dependencies fluctuates with user mobility and coherence time, functioning as a scenario-dependent variable.
\end{itemize}

Consequently, a CSI sample from a specific scenario is formalized as a complex-valued tensor $\mathbf{H} \in \mathbb{C}^{T \times K \times A}$, where the tuple $(T, K, A)$ follows a highly non-uniform distribution across the global dataset.

\subsection{The Efficiency-Generalization Dilemma}
\label{subsec:dilemma}
We consider a global data collection $\mathcal{X} = \bigcup_{n=1}^{N} \mathcal{D}_n$, aggregated from $N$ heterogeneous datasets. To enable a unified foundation model to process these heterogeneous CSI, we employ a 3D patching projection. This operation transforms the raw CSI tensor of sample $x_i$ into a flattened sequence of tokens with length $L(x_i)$, calculated as:
\begin{equation}
L(x_i)=\left\lceil\frac{T_{i}}{t}\right\rceil \times \left\lceil\frac{K_{i}}{k}\right\rceil \times \left\lceil\frac{A_{i}}{a}\right\rceil,
\end{equation}
where $(t, k, a)$ denotes the fixed patch size. Due to the scale heterogeneity, $L(x_i)$ exhibits significant variance. In standard parallel training, samples in a mini-batch $\mathcal{B}_j$ are zero-padded to the batch-wise maximum length $h_j = \max_{x_i \in \mathcal{B}_j} L(x_i)$.

To quantify the computational inefficiency, we define the total padding overhead $J_{\mathrm{pad}}$ as the cumulative sum of invalid tokens across all batches:
\begin{equation}
    J_{\mathrm{pad}}(\mathbb{B}) = \sum_{j=1}^{N_{\mathrm{batch}}} \sum_{x_i \in \mathcal{B}_j} \left( h_j - L(x_i) \right).
\end{equation}

This padding poses a dilemma between efficiency and generalization. Under naive global shuffling, heterogeneous CSI samples are randomly mixed; while this mitigates distribution shift to favor generalization, it inevitably maximizes the padding overhead $J_{\mathrm{pad}}$ and exacerbates scale-induced gradient conflicts. Conversely, scale- and scenario-isolated training while effectively minimizing $J_{\mathrm{pad}}$, restricts the model to local scale-specific statistics, impairing its ability to capture cross-scale correlations and consequently degrading overall generalization performance.

\subsection{Problem Formulation and Analysis}
\label{sec:problem_formulation}

\subsubsection{Problem Formulation}
Our objective is to construct training batches that minimize padding overhead (to ensure efficiency and scale consistency) while maintaining sufficient data source diversity (to ensure generalization). We formulate the batch construction problem as partitioning the global sample set $\mathcal{X}$ into $N_{batch}$ non-overlapping mini-batches $\mathbb{B} = \{\mathcal{B}_1, \dots, \mathcal{B}_{N_{batch}}\}$ of fixed size $N_{bs}$. The problem is formulated as:

\begin{equation}
    \begin{aligned}
    \textbf{P:} \quad \min_{\mathbb{B}} \quad & J_{pad}(\mathbb{B}) \\
    \textrm{s.t.} \quad & \text{C1: } \bigcup_{j=1}^{N_{batch}} \mathcal{B}_j = \mathcal{X}, \quad \mathcal{B}_j \cap \mathcal{B}_k = \emptyset, \quad \forall j \neq k, \\
    & \text{C2: } |\mathcal{B}_j| = N_{bs}, \quad \forall j, \\
    & \text{C3: } \mathcal{H}(\{d(x_i) \mid x_i \in \mathcal{B}_j\}) \geq \epsilon, \quad \forall j.
    \end{aligned}
    \label{eq:general_problem}
\end{equation}
where $d(x_i)$ maps a sample $x_i$ to its source dataset index, and $\mathcal{H}(\cdot)$ denotes a metric that quantifies the diversity of data sources within a batch. Constraint C3 ensures that the distribution of scenarios in each mini-batch remains sufficiently diverse (above the threshold $\epsilon$) to prevent the model from overfitting to specific statistical domains.

\subsubsection{Problem Analysis}

Problem \textbf{P} is formally a constrained set partitioning problem~\cite{balas1976set}. Given the large volume of data in dataset $\mathcal{X}$, finding an exact global optimum is NP-hard. The challenge arises from the combinatorial explosion of assigning samples to batches. Specifically, batch assignments are globally interdependent due to the strict partitioning enforced by C1. Furthermore, there is an inherent conflict between the objectives: minimizing padding typically requires grouping samples of similar scale, whereas constraint C3 requires grouping samples from diverse sources (which may have varied scale). To address this computational intractability, we propose a hierarchical approach that transforms $\textbf{P}$ into a coarse-grained bucket partitioning problem, followed by a stochastic sampling strategy.

\begin{enumerate}
    \item We first relax the strict mini-batch constraints to partition the global dataset into $B$ buckets. The objective transforms into minimizing the potential padding overhead within each bucket while guaranteeing diversity at the bucket level. This sub-problem is formulated as:
    
    \begin{equation}
    \begin{aligned}
    \textbf{P}': \quad & \min_{\mathbb{U}} J_{pad}(\mathbb{U}) \\
    \textrm{s.t.} \quad & \bigcup_{k=1}^{B} b_k = \mathcal{X}, \quad b_i \cap b_j = \emptyset, \\
    & \mathcal{H}(\{d(x_{i}) | x_{i} \in b_{k}\}) \ge \epsilon, \quad \forall k.
    \end{aligned}
    \end{equation}
    
    By solving $\textbf{P}'$, we obtain a set of scale-similar buckets $\mathbb{U}^*$, where samples within the same bucket share similar scale yet originate from diverse scenarios.

    \item Given the optimal partition $\mathbb{U}^{*}$ from $\textbf{P}'$, the problem reduces to constructing the final mini-batches $\mathcal{B}$ from these coarse-grained buckets. The objective is to devise an assignment strategy that distributes samples from $\mathbb{U}^{*}$ into batches, ensuring that the strict scenario diversity constraint C3 of the original problem $\textbf{P}$ is recovered while preserving the scale consistency achieved in the relaxation phase.
\end{enumerate}

\begin{figure*}[!t]
\centering
\includegraphics[width=\linewidth,keepaspectratio]{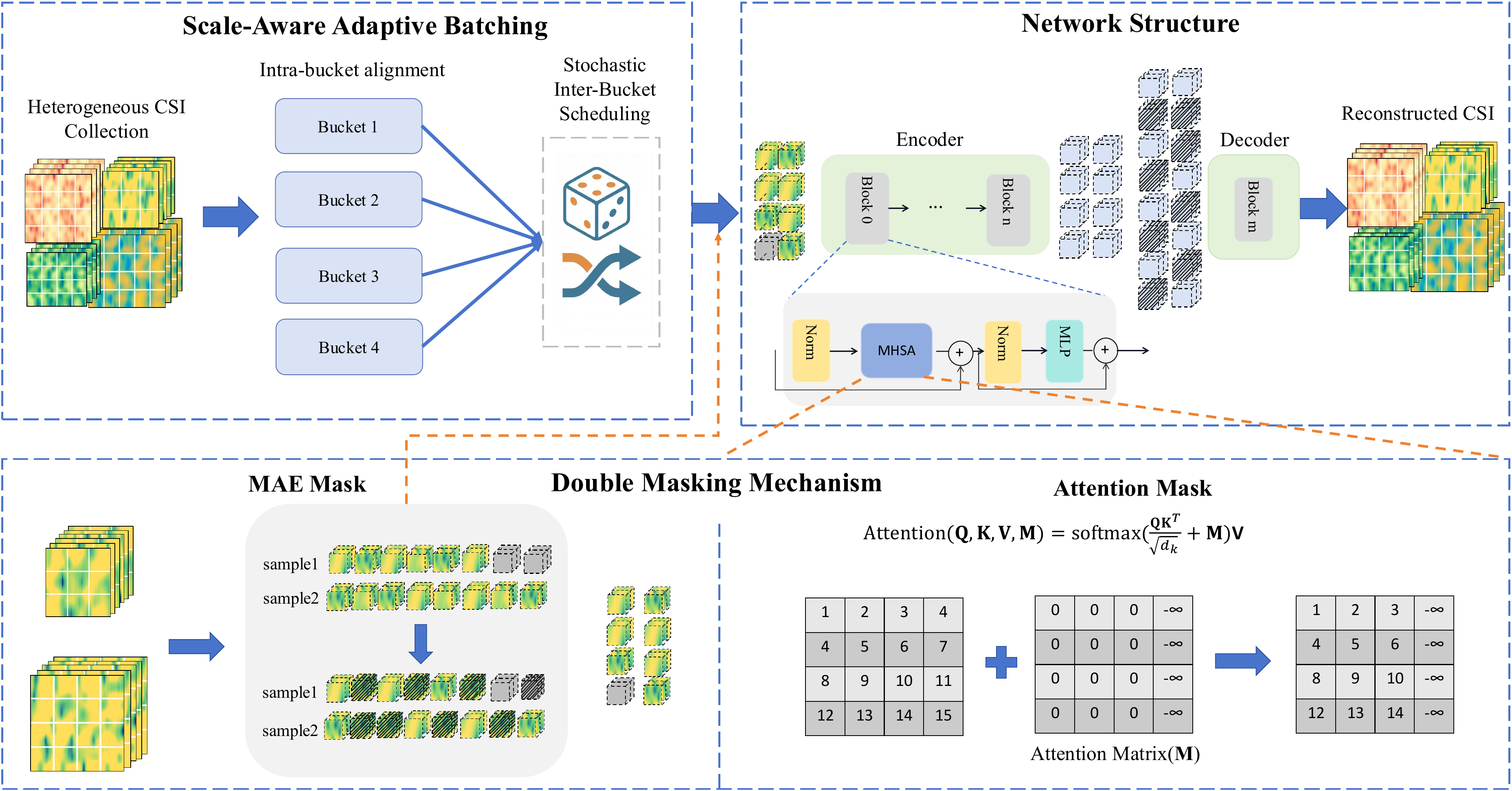}
\caption{Overview of the HeterCSI framework. The framework introduces \textit{scale-aware adaptive batching strategy}, which synergizes intra-bucket gradient alignment with inter-bucket stochastic scheduling. Furthermore, a \textit{double masking mechanism} is designed to rigorously isolate valid signals from padding artifacts via attention masking, while simultaneously driving self-supervised representation learning through MAE random masking. The processed tokens are finally reconstructed via a standard Transformer-based encoder-decoder architecture.}
\label{fig:framework}
\end{figure*}

\section{Channel-Adaptive Heterogeneous CSI Pretraining Framework}

This section presents a channel-adaptive self-supervised learning framework. We begin with an overview of the proposed framework, followed by a detailed introduction to the scale-aware adaptive batching strategy and the double-masking mechanism. Finally, we provide a description of the network architecture and the associated CSI tasks.
\subsection{Overview of Proposed Framework}
Our approach is motivated by the need for a training paradigm that achieves both efficiency and strong generalization when learning from multi-scale, multi-scenario CSI data. We propose a channel-adaptive pre-training framework, as illustrated in Fig.~\ref{fig:framework}. Specifically, we address the following key challenges:

\textit{1) Generalization Loss via Sequential Bias:} 
Naive sequential training on heterogeneous CSI induces severe distribution shifts. This triggers catastrophic forgetting of prior scenarios and local overfitting, fundamentally compromising the model's ability to generalize across diverse environments.

\textit{2) Cross-Scale Gradient Conflict:} 
Batching CSI samples with vastly different scales creates conflicting gradient directions. This interference destabilizes the optimization landscape, hindering convergence and the learning of robust, scale-invariant representations.

\textit{3) Padding Overhead and Feature Dilution:} 
While data shuffling aids generalization, it necessitates heavy zero-padding for multi-scale CSI data. This introduces prohibitive computational waste and dilutes valid features with non-informative tokens, degrading extraction efficiency.

To address these challenges, we propose the following designs:

\begin{itemize}
    \item \textbf{Scale-Aware Adaptive Batching Strategy:}
    We develop a scale-aware adaptive batching strategy that combines intra-bucket alignment to minimize padding overhead while preserving intrinsic intra-bucket diversity, with inter-bucket stochastic scheduling to maintain global data diversity and ensure robust generalization across heterogeneous scenarios.

    \item \textbf{Double Masking Mechanism:}
    We adopt a double-masking mechanism that integrates MAE~\cite{he2022masked} with attention masking. The former enables self-supervised reconstruction of CSI, while the latter prevents padded tokens from interfering with attention, allowing the model to focus on valid CSI structures.
\end{itemize}

\subsection{Scale-Aware Adaptive Batching Strategy}
To address problem \textbf{P}, we propose a scale-aware adaptive batching strategy. Our design rationale stems from the decomposition of the dual heterogeneity: while scale discrepancy constitutes the primary barrier to computational efficiency and gradient stability, scenario diversity remains the cornerstone of generalization. Consequently, we treat Problem $\textbf{P}$ as a two-stage decision process: (1) intra-bucket alignment to strictly minimize $J_{pad}$ by clustering scale-compatible samples, effectively solving the relaxation $\textbf{P}'$; and (2) inter-bucket stochastic scheduling to asymptotically satisfy the diversity constraint $C3$ via stochastically sampling from these pre-sorted buckets.

\subsubsection{Intra-Bucket Alignment}
Problem \textbf{P'} aims to determine a bucket partition strategy that minimizes the total padding overhead subject to the scenario diversity. To solve this problem, we propose a sort-and-partition heuristic that efficiently approximates the optimal solution, which operates on the global dataset $\mathcal{X}$ and involves the following two steps:
\begin{enumerate}
    \item \textbf{Global Sorting:} Instead of processing datasets in isolation, we aggregate all samples into a unified pool and sort them based on their flattened sequence length $L(x_i)$. This creates a globally ordered sequence $\mathcal{S}_{ordered}$ where adjacent samples possess high dimensional similarity. 
    
    \item \textbf{Bucket Partitioning:} We discretize the continuous sorted sequence into a fixed number of buckets, denoted by the hyperparameter $B$. Specifically, $\mathcal{S}_{ordered}$ is sliced into $B$ equal-sized buckets $\mathbb{U} = \{b_1, \dots, b_B\}$, where each bucket contains approximately $|\mathcal{X}|/B$ samples.
\end{enumerate}

This strategy ensures that samples within each bucket $b_k$ have minimal length variance (minimizing $J_{pad}$), while the global shuffling during sorting ensures that the source entropy $\mathcal{H}(\{d(x_i) \mid x_i \in b_k\})$ remains high (satisfying the constraint). In the following, we formally prove that, under a fixed number of buckets, the proposed intra-bucket alignment strategy achieves the minimum padding overhead.

\begin{theorem}
    Given a fixed number of partitions, the strategy of grouping datasets by sorted lengths yields a globally optimal solution that minimizes the total padding overhead.
\end{theorem}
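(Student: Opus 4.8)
\emph{Proof plan.} The plan is to collapse the combinatorial objective onto a one-dimensional order-statistic inequality. First I would rewrite the padding overhead at the bucket level: for any partition $\mathbb{U}=\{b_1,\dots,b_B\}$ of the pooled sample set $\mathcal{X}$ (equivalently of the datasets, when each source is scale-homogeneous) into $B$ buckets of equal size $n=|\mathcal{X}|/B$, and writing $h_k=\max_{x_i\in b_k}L(x_i)$,
\[
J_{\mathrm{pad}}(\mathbb{U})=\sum_{k=1}^{B}\Big(n\,h_k-\!\!\sum_{x_i\in b_k}\!\!L(x_i)\Big)=n\sum_{k=1}^{B}h_k-\!\!\sum_{x_i\in\mathcal{X}}\!\!L(x_i).
\]
Because $\sum_{x_i\in\mathcal{X}}L(x_i)$ is invariant over all partitions and $n$ is fixed, minimizing $J_{\mathrm{pad}}$ is equivalent to minimizing $\Phi(\mathbb{U})=\sum_{k=1}^{B}h_k$, the sum of per-bucket maximal token lengths. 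Hence it suffices to show that the sort-and-slice bucketing minimizes $\Phi$.

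I would then compute $\Phi$ on the proposed solution and produce a matching lower bound for every competitor. Relabel the samples so that $L(x_{(1)})\le\dots\le L(x_{(M)})$ with $M=|\mathcal{X}|$, and let $b_k^{\star}=\{x_{((k-1)n+1)},\dots,x_{(kn)}\}$ be the $k$-th contiguous block; then $h_k^{\star}=L(x_{(kn)})$ and $\Phi(\mathbb{U}^{\star})=\sum_{k=1}^{B}L(x_{(kn)})$. For an arbitrary equal-size partition $\mathbb{U}$, sort its bucket maxima as $h_{(1)}\le\dots\le h_{(B)}$ and fix $k$: the $k$ buckets realizing $h_{(1)},\dots,h_{(k)}$ jointly hold exactly $kn$ samples, every one of which lies in a bucket of maximum at most $h_{(k)}$ and therefore has length $\le h_{(k)}$. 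Consequently at least $kn$ samples of $\mathcal{X}$ have length $\le h_{(k)}$, which forces the $kn$-th order statistic to satisfy $L(x_{(kn)})\le h_{(k)}$. Summing over $k$ gives $\Phi(\mathbb{U})=\sum_k h_{(k)}\ge\sum_k L(x_{(kn)})=\Phi(\mathbb{U}^{\star})$, so $\mathbb{U}^{\star}$ is a global minimizer of $\Phi$ and hence of $J_{\mathrm{pad}}$. An equivalent route is an exchange argument: starting from any optimum, repeatedly swap a short sample sitting in the bucket that contains the globally longest sample with a long sample lying outside it — each swap leaves that bucket's maximum untouched, does not raise any other bucket's maximum, and a potential counting matched samples certifies termination at the sorted bucketing.

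The argument contains no genuinely deep step; the care is concentrated in a few places. Ties among the $L(x_i)$ are harmless because every inequality above is non-strict, and the $k$ ``smallest-maximum'' buckets may be chosen with ties broken arbitrarily. If $|\mathcal{X}|$ is not divisible by $B$, the contiguous blocks are merely sized $\lfloor M/B\rfloor$ or $\lceil M/B\rceil$ and the same counting bound goes through after carrying the per-bucket sizes $|b_k|$ through $\Phi$, so this is minor bookkeeping. The one substantive point to state explicitly is that Constraint C3 plays no role in $J_{\mathrm{pad}}$: this theorem is exactly the claim that sort-and-slice solves the unconstrained relaxation $\textbf{P}'$, and to connect it back to the constrained problem I would add the remark that, since the global sort is performed only after pooling and shuffling all sources, each resulting bucket retains high source entropy $\mathcal{H}(\{d(x_i)\mid x_i\in b_k\})$, so $\mathbb{U}^{\star}$ stays feasible for — and therefore remains optimal under — the reinstated C3. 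I expect this feasibility remark, rather than the optimality inequality itself, to be the only part of the write-up that needs extra justification.
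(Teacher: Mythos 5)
Your proposal is correct, and its main line of attack differs from the paper's. Both arguments start from the same reduction — with equal-size buckets, $J_{\mathrm{pad}}$ equals $n\sum_k h_k$ minus a partition-invariant constant, so only the sum of bucket maxima matters — but from there the paper proves optimality by a rearrangement/uncrossing exchange: it takes a "crossed pair" ($L_{long}$ in a bucket with small maximum, $L_{short}$ in a bucket with large maximum), swaps them, and observes that $\sum_j h_j$ never increases, concluding that the sorted grouping is optimal. Your primary route instead certifies global optimality directly: for any equal-size partition, the $k$ buckets with the smallest maxima contain $kn$ samples all of length at most $h_{(k)}$, so $L(x_{(kn)})\le h_{(k)}$, and summing over $k$ gives $\Phi(\mathbb{U})\ge\sum_k L(x_{(kn)})=\Phi(\mathbb{U}^{\star})$. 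This matching lower bound buys you something the paper's sketch leaves implicit — the exchange argument as written only shows local moves do not hurt, and strictly speaking one still needs to argue that iterated uncrossing terminates at the sorted partition and that this implies global optimality, which your order-statistic bound sidesteps entirely (and handles ties for free). You also make explicit two points the paper glosses over: the equivalence of minimizing $J_{\mathrm{pad}}$ and minimizing $\sum_k h_k$ hinges on equal bucket cardinalities (constraint C2 in the paper's setting), and the diversity constraint C3 is not part of this optimality claim but is recovered separately via the pooling/entropy argument, which is exactly how the paper structures its own discussion around Eq.~(5)--(6). Your secondary "exchange argument" remark is essentially the paper's proof, so you cover both routes; the only residual bookkeeping is the non-divisible case $B\nmid|\mathcal{X}|$, where the objective becomes $\sum_k |b_k| h_k$ and the counting bound needs the per-bucket sizes carried through — a case the paper does not treat either.
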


\begin{proof}
    We employ the principle of the rearrangement inequality. Note that minimizing total padding $\sum (h_j - L(x_i))$ is equivalent to minimizing the sum of batch maximums $\sum h_j$. Assume a non-optimal partition exists containing a crossed pair: a long sequence $L_{long}$ in a batch with a small maximum $h_{small}$, and a short sequence $L_{short}$ in a batch with a large maximum $h_{large}$ (where $L_{short} < L_{long}$ and $h_{small} < h_{large}$).
    
    Swapping these sequences moves $L_{long}$ to the batch with $h_{large}$. Since $L_{long} \le h_{large}$ (in a sorted context), the maximum of the larger batch does not increase. Crucially, removing $L_{long}$ from the smaller batch allows $h_{small}$ to decrease (or stay the same). Thus, the objective function $\sum h_j$ is monotonically reduced or preserved by uncrossing partitions. This concludes the proof.
\end{proof}
\begin{algorithm}[t]
    \caption{HeterCSI: Scale-Aware Adaptive Batching}
    \label{alg:hetercsi}
    \begin{algorithmic}[1]
    \REQUIRE Global Datasets $\mathcal{X}=\{\mathcal{D}_1, \dots, \mathcal{D}_N\}$, Batch Size $N_{bs}$, Buckets $B$, Training Steps $T_{total}$
    \ENSURE Pre-trained Model Parameters $\theta$
    
    \STATE \textbf{// Phase 1: Intra-Bucket Alignment}
    \STATE Aggregate all samples: $\mathcal{X}_{pool} \leftarrow \bigcup_{n=1}^N \mathcal{D}_n$.
    \STATE Calculate token sequence length $L(x_i)$ for each $x_i \in \mathcal{X}_{pool}$.
    \STATE Sort samples to obtain ordered sequence $\mathcal{S}_{ordered}$.
    \STATE Determine bucket capacity $C \leftarrow \lceil |\mathcal{X}_{pool}|/B \rceil$.
    \FOR{$k = 1$ to $B$}
        \STATE Partition stream into scale-similar bucket:
        \STATE $b_k \leftarrow \mathcal{S}_{ordered}[(k-1)C : kC]$.
    \ENDFOR
    
    \STATE \textbf{// Phase 2: Inter-Bucket Stochastic Scheduling}
    \STATE Initialize step $t \leftarrow 0$.
    \WHILE{$t < T_{total}$}
        \STATE Initialize batch pool $\mathcal{B}_{pool} \leftarrow \emptyset$
        \FOR{$k = 1$ to $B$}
            \STATE Shuffle samples within bucket $b_k$.
            \STATE Split $b_k$ into mini-batches and add to $\mathcal{B}_{pool}$.
        \ENDFOR
        \STATE Shuffle $\mathcal{B}_{pool}$ globally to mix scales and scenarios.
        
        \FOR{each mini-batch $\mathcal{M} \in \mathcal{B}_{pool}$}
            \STATE Compute batch-wise length $h \leftarrow \max_{x \in \mathcal{M}} L(x)$.
            \STATE Pad samples in $\mathcal{M}$ to length $h$ to obtain input $\mathbf{X}$.
            \STATE Update $\theta \leftarrow \text{Optimizer}(\theta, \nabla \mathcal{L}(\mathbf{X}))$.
            \STATE $t \leftarrow t + 1$.
            \IF{$t \geq T_{total}$}
                \STATE \textbf{break}.
            \ENDIF
        \ENDFOR
    \ENDWHILE
    \end{algorithmic}
\end{algorithm}

\subsubsection{Inter-Bucket Stochastic Scheduling}
Given the optimal partition $\mathbb{U}^{*}$ from $\textbf{P}'$, we construct the final mini-batches. Instead of performing computationally expensive combinatorial optimization, we solve the assignment via stochastic sampling. Since the buckets in $\mathbb{U}^*$ are pre-conditioned to be diverse, random sampling within these buckets ensures that the resulting mini-batches $\mathcal{B}_j$ satisfy the original diversity constraint C3 of Problem $\textbf{P}$ in expectation, while preserving the scale consistency achieved in the first phase.

We formally justify that the constructed mini-batches satisfy the scenario diversity constraint. Let $\mathbf{p}_k^{\text{bucket}} \in \mathbb{R}^N$ denote the probability vector representing the proportion of the $N$ datasets within bucket $b_k$. When a mini-batch $\mathcal{B}_j$ is formed via uniform random sampling from $b_k$, the probability of selecting a sample from any specific dataset $n$ equals its proportion in the bucket. Thus, the expected distribution of the mini-batch $\mathbf{p}_j^{\text{batch}}$ serves as an unbiased estimator of the bucket's distribution:
\begin{equation}
    \mathbb{E}[\mathbf{p}_j^{\text{batch}}] = \mathbf{p}_k^{\text{bucket}}.
    \label{eq:expectation}
\end{equation}
Consequently, the expected diversity of data sources within a batch converges to the corresponding bucket:
\begin{equation}
    \mathbb{E}[\mathcal{H}(\mathcal{B}_j)] \approx \mathcal{H}(b_k) \ge \epsilon.
    \label{eq:entropy}
\end{equation}
This derivation proves that our stochastic scheduling strategy effectively transfers the diversity guarantee from the coarse-grained bucket level (secured by the bucket shuffling in Phase 1) to the fine-grained batch level (required by problem $\textbf{P}$). This mechanism ensures robust generalization by exposing the model to diverse scenarios over time, all while maintaining the high computational efficiency of scale-similar batching.

\subsection{Double Masking Mechanism Adaptation}
The proposed framework incorporates a double masking mechanism that serves two objectives: facilitating self-supervised representation learning and ensuring computational validity under variable-length batched processing.

\subsubsection{MAE Masking for Representation Learning} 
Following the MAE paradigm~\cite{he2022masked}, we construct our framework based on an asymmetric encoder-decoder architecture with a self-supervised reconstruction objective. With a portion of the input CSI masked in advance, the encoder processes only the visible tokens to learn semantic latent representations. The decoder then reconstructs the missing signals based on these representations, facilitating self-supervised learning of the underlying CSI structure. To faithfully capture the complex three-dimensional variability of CSI, we adopt the masking strategies introduced in the WiFo framework~\cite{liu2025wifo}, including \textit{random masking}, \textit{time-domain masking}, and \textit{frequency-domain masking}. Each strategy targets a distinct structural aspect: random masking promotes global dependency modeling, time-domain masking enforces temporal causality awareness, and frequency-domain masking strengthens spectral correlation learning.
\subsubsection{Attention Mask Formulation}
We formally define the attention mechanism incorporating the padding. Let $\mathbf{Q}, \mathbf{K}, \mathbf{V} \in \mathbb{R}^{N_{BS} \times L \times d}$ denote the query, key, and value matrices, respectively. The masked scaled dot-product attention is computed as:

\begin{equation}
    \text{Attention}(\mathbf{Q}, \mathbf{K}, \mathbf{V}, \mathbf{M}) = \text{softmax}\left( \frac{\mathbf{Q}\mathbf{K}^\top}{\sqrt{d_k}} + \mathbf{M} \right) \mathbf{V},
\end{equation}
where $\mathbf{M} \in \mathbb{R}^{N_{BS} \times L \times L}$ serves as the additive bias matrix governing token visibility. Let $L_b$ denote the actual valid length of the $b$-th sample in the batch. The entry $M_{b,i,j}$ determines the compatibility between the $i$-th query and the $j$-th key in sample $b$:
\begin{equation}
    M_{b,i,j} = 
    \begin{cases} 
    0 & \text{if } j \le L_b \\
    -\infty & \text{if } j > L_b
    \end{cases}.
\end{equation}
Here, the condition $j > L_b$ identifies that the $j$-th key corresponds to a padding token. By setting $M_{b,i,j} = -\infty$, the corresponding attention weight approaches zero after the softmax operation. This ensures that the value vector $\mathbf{v}_j$ at any padding position ($j > L_b$) makes no contribution to the aggregated context vector, thereby preserving feature purity.

Note that the attention mask targets only the Keys ($\mathbf{K}$) while leaving Queries ($\mathbf{Q}$) unmasked. This asymmetric approach is grounded in the row-column independence of the attention matrix and is justified as follows:

\begin{itemize}
    \item \textbf{Masking Keys:} We strictly mask columns where $j > L_b$ to prevent valid tokens from attending to uninformative zero-padding. Without this constraint, the softmax probability mass would be diluted by padding tokens, introducing noise into valid CSI representations.
    
    \item \textbf{Unmasked Queries:} Queries at padding positions (\(i > L_b\)) are intentionally left unmasked. Due to the row-wise independence of attention computation, valid token representations remain unaffected. Since the corresponding outputs are excluded from the loss, gradient propagation remains stable, yielding a simpler and numerically robust attention formulation.
\end{itemize}

\subsection{Implementation Design}

\subsubsection{Network Architecture}
The proposed network architecture, as illustrated in Fig.~\ref{fig:framework}, employs a Vision Transformer (ViT) as the backbone for channel modeling~\cite{dosovitskiy2020image}. The input, represented as 3D feature tensors is first partitioned into patches and subjected to padding and masking via a MAE preprocessing stage to simulate incomplete observations. These processed tokens are then fed into a stack of encoder blocks, each comprising multi-head self-attention (MHSA) with attention masks, layer normalization, and multi-layer perceptron (MLP) sublayers, interconnected by residual connections to facilitate gradient flow. The decoder module leverages skip connections and attention masks to reconstruct the original signal dimensions, preserving spatial relationships through a transformer-based framework.

\subsubsection{CSI Prediction Task}
We consider three categories of channel prediction tasks: CSI reconstruction, time-domain prediction, and frequency-domain prediction, all of which are formulated under a unified general reconstruction framework.

\textbf{1. CSI Reconstruction.}
The general problem is to recover the complete CSI tensor from a partially observed subset:
\begin{equation}
\mathbf{H} = \Phi_{\text{rec}}\big(\mathbf{H}[\Omega]\big),
\end{equation}
where $\Omega$ denotes an arbitrary set of indices across the space--time--frequency (STF) dimensions, and $\Phi_{\text{rec}}(\cdot)$ is the reconstruction function.  

\textbf{2. Time-Domain Channel Prediction.}
A special case of reconstruction where $\Omega$ spans all subcarriers and antennas but only the first $T_h$ time blocks. The objective is to predict future CSI based on historical observations:
\begin{equation}
\mathbf{H}[T_h+1:T,:,:] = \Phi_{t}\big(\mathbf{H}[1:T_h,:,:]\big).
\end{equation}

\textbf{3. Frequency-Domain Channel Prediction}
Another special case of reconstruction where $\Omega$ spans all time blocks and antennas but only the first $K_u$ subcarriers. The goal is to extrapolate CSI to adjacent frequency bands:
\begin{equation}
\mathbf{H}[:,K_u+1:K,:] = \Phi_{f}\big(\mathbf{H}[:,1:K_u,:]\big).
\end{equation}

In summary, both time-domain and frequency-domain channel prediction can be regarded as special cases of the general CSI reconstruction problem.

\section{Experiments}
This section validates the effectiveness of our channel-adaptive pretraining framework in reconciling computational efficiency with robust generalization. In the following, we introduce our experimental settings and analyze the zero-shot performance, scalability, and impact of bucket granularity across diverse heterogeneous wireless scenarios.
\subsection{Experiment Settings}

We conduct the experimental evaluations on two computing platforms. One workstation is equipped with an Intel(R) Xeon(R) Gold 5320 CPU @ 2.20 GHz, 256 GB RAM, and four NVIDIA GeForce RTX 4090 GPUs. The other server-grade platform is configured with an Intel(R) Xeon(R) Platinum 8468V CPU @ 3.80 GHz, 1.5 TB RAM, and eight NVIDIA H800 GPUs. The implementation and execution of model training are carried out using the PyTorch framework~\cite{PaszkeGMLBCKLGA19}.

\subsubsection{Dataset}
We construct CSI datasets encompassing diverse space-time-frequency configurations, and user mobility patterns. These datasets are generated using the 3GPP-standard compliant channel generator QuaDRiGa~\cite{jaeckel2014quadriga}, adopting a MISO-OFDM system architecture where the base station is equipped with a UPA planar array antenna and user equipment employs a single antenna configuration with half-wavelength spacing at the central frequency.

During the training phase, 40 datasets are utilized, each containing 12,000 samples randomly partitioned into training, validation, and test sets with a 9,000:1,000:2,000 ratio. For evaluation, 12 additional datasets with configurations distinct from the pre-training phase are generated, as summarized in Table II. These datasets serve as zero-shot test scenarios for our framework while acting as training data for the full-shot baselines. Complex Gaussian noise at 20 dB is added to all CSI samples during both training and inference processes. Detailed configuration parameters are available in the source code.

\subsubsection{Baseline}

To verify the superiority of our framework, we conduct a comparative study from two distinct perspectives: \textbf{model architecture} and \textbf{training paradigm}. For a fair comparison, all deep-learning-based baselines are implemented with the same decentralized training algorithm, identical learning rates, and consistent optimization settings. The specific baselines are detailed as follows:

\begin{itemize}
  \item LLM4CP~\cite{liu2024llm4cp}: LLM4CP is a LLM-empowered channel prediction scheme, where GPT-2 is finetuned for cross-modality knowledge transfer.
  \item WiFo~\cite{liu2025wifo}: The state-of-the-art wireless foundation model supporting multi-scale inputs. However, it employs a scale- and scenario-isolated training strategy without cross-scale shuffling, thereby failing to address the optimization challenges posed by dual heterogeneity.
  \item Transformer~\cite{jiang2022accurate}: A conventional sequence-to-sequence transformer that models CSI as flattened 1D time-series data, fundamentally differing from our ViT backbone which utilizes 3D patching to capture high-dimensional spatial-temporal structures.
  \item LSTM~\cite{jiang2020deep}: A two-layer LSTM network adopted as the sequential baseline. To accommodate 3D CSI tensors, we flatten the non-predictive dimensions—specifically merging antenna and frequency for time-domain tasks, or antenna and time for frequency-domain tasks—to construct the feature vectors.
  \item PAD~\cite{yin2020addressing}: A Prony-based estimator operating in the angular-delay domain, which is inherently restricted to time-domain extrapolation tasks. For implementation, we configure the predictor order as $N = 4$ and $N = 6$ corresponding to observation windows of $T = 16$ and $T = 24$, respectively.

\end{itemize}

\begin{table*}[t]
    \caption{Performance Comparison on Datasets D1--D12 in NMSE (dB). 'Rec', 'Pre-T', and 'Pre-F' denote CSI Reconstruction, Time-domain Prediction, and Frequency-domain Prediction tasks. The best performance in each row is highlighted in bold. The 'Improvement' column indicates the gain of the Proposed method over the best baseline (including both Full-shot and Zero-shot methods).}
    \label{tab:performance_comparison}
    \centering
    \resizebox{\textwidth}{!}{%
    \begin{tabular}{l l c l c c c c c c c c}
    \toprule
    \multirow{2}{*}{\textbf{Dataset}} & \multirow{2}{*}{\textbf{Scenario}} & \textbf{Freq.} & \multirow{2}{*}{\textbf{Task}} & \multicolumn{4}{c}{\textbf{Full-shot}} & \multicolumn{3}{c}{\textbf{Zero-shot}} & \textbf{Improvement} \\
    \cmidrule(lr){5-8} \cmidrule(lr){9-11} \cmidrule(lr){12-12}
     &  & (GHz) &  & \textbf{LSTM} & \textbf{Transformer} & \textbf{LLM4CP} & \textbf{BERT4MIMO} & \textbf{PAD} & \textbf{WiFo} & \textbf{Proposed} & \textbf{vs. Best} \\
    \midrule
    
    \multirow{3}{*}{\textbf{D1}} & \multirow{3}{*}{Indoor} & \multirow{3}{*}{2.4}
     & Rec   & - & - & - & -18.15 & - & -7.29 & \textbf{-19.26} & +1.11 \\
     & & & Pre-T & \textbf{-19.50} & -18.91 & -19.34 & - & -6.16 & -6.24 & -19.42 & -0.08 \\
     & & & Pre-F & -16.85 & -17.57 & -18.03 & - & - & -5.46 & \textbf{-18.43} & +0.40 \\
    \midrule
    
    \multirow{3}{*}{\textbf{D2}} & \multirow{3}{*}{Indoor} & \multirow{3}{*}{39.0}
     & Rec   & - & - & - & -12.81 & - & -5.95 & \textbf{-15.91} & +3.10 \\
     & & & Pre-T & -17.50 & -17.67 & \textbf{-18.58} & - & -1.03 & -4.61 & -18.50 & -0.08 \\
     & & & Pre-F & -5.57 & -5.54 & -4.88 & - & - & -1.65 & \textbf{-8.95} & +3.38 \\
    \midrule
    
    \multirow{3}{*}{\textbf{D3}} & \multirow{3}{*}{Indoor} & \multirow{3}{*}{42.0}
     & Rec   & - & - & - & -17.16 & - & -6.77 & \textbf{-17.69} & +0.53 \\
     & & & Pre-T & -10.93 & -12.48 & -14.54 & - & 3.12 & -3.02 & \textbf{-15.52} & +0.98 \\
     & & & Pre-F & -5.43 & -5.47 & -5.85 & - & - & -1.08 & \textbf{-9.03} & +3.18 \\
    \midrule
    
    \multirow{3}{*}{\textbf{D4}} & \multirow{3}{*}{RMa} & \multirow{3}{*}{2.6}
     & Rec   & - & - & - & \textbf{-18.27} & - & -11.82 & -18.04 & -0.23 \\
     & & & Pre-T & -5.07 & -5.64 & -6.28 & - & 3.76 & -4.64 & \textbf{-9.30} & +3.02 \\
     & & & Pre-F & -9.39 & -11.06 & -12.47 & - & - & -10.00 & \textbf{-16.87} & +4.40 \\
    \midrule
    
    \multirow{3}{*}{\textbf{D5}} & \multirow{3}{*}{RMa} & \multirow{3}{*}{4.8}
     & Rec   & - & - & - & -13.84 & - & -9.07 & \textbf{-16.72} & +2.88 \\
     & & & Pre-T & -7.86 & -9.58 & -13.30 & - & 3.15 & -6.95 & \textbf{-15.52} & +2.22 \\
     & & & Pre-F & -3.51 & -3.91 & -4.31 & - & - & -3.31 & \textbf{-8.47} & +4.16 \\
    \midrule

    \multirow{3}{*}{\textbf{D6}} & \multirow{3}{*}{RMa} & \multirow{3}{*}{37.5}
     & Rec   & - & - & - & \textbf{-16.70} & - & -10.47 & -15.29 & -1.41 \\
     & & & Pre-T & -1.13 & -1.45 & -2.52 & - & 4.71 & -2.87 & \textbf{-4.42} & +1.55 \\
     & & & Pre-F & -3.51 & -5.21 & -6.67 & - & - & -6.55 & \textbf{-10.81} & +4.14 \\
    \midrule
    
    \multirow{3}{*}{\textbf{D7}} & \multirow{3}{*}{UMa} & \multirow{3}{*}{2.1}
     & Rec   & - & - & - & -10.78 & - & -6.62 & \textbf{-15.55} & +4.77 \\
     & & & Pre-T & -10.36 & -11.22 & -11.91 & - & 3.35 & -5.96 & \textbf{-14.63} & +2.72 \\
     & & & Pre-F & -9.24 & -10.59 & -10.31 & - & - & -6.67 & \textbf{-15.76} & +5.17 \\
    \midrule
    
    \multirow{3}{*}{\textbf{D8}} & \multirow{3}{*}{UMa} & \multirow{3}{*}{26.5}
     & Rec   & - & - & - & \textbf{-17.47} & - & -6.63 & -15.67 & -1.80 \\
     & & & Pre-T & -4.37 & -4.53 & -4.91 & - & 4.03 & -3.39 & \textbf{-6.48} & +1.57 \\
     & & & Pre-F & -8.16 & -11.16 & -11.21 & - & - & -4.12 & \textbf{-14.76} & +3.55 \\
    \midrule

    \multirow{3}{*}{\textbf{D9}} & \multirow{3}{*}{UMa} & \multirow{3}{*}{42.0}
     & Rec   & - & - & - & -10.59 & - & -7.00 & \textbf{-13.01} & +2.42 \\
     & & & Pre-T & -5.64 & -5.58 & -5.64 & - & 3.28 & -5.07 & \textbf{-7.65} & +2.01 \\
     & & & Pre-F & -8.34 & -10.44 & -10.92 & - & - & -9.92 & \textbf{-15.22} & +4.30 \\
    \midrule

    \multirow{3}{*}{\textbf{D10}} & \multirow{3}{*}{UMi} & \multirow{3}{*}{2.1}
     & Rec   & - & - & - & -9.50 & - & -7.95 & \textbf{-11.14} & +1.64 \\
     & & & Pre-T & -4.90 & -6.31 & -9.05 & - & 3.54 & -8.21 & \textbf{-9.80} & +0.75 \\
     & & & Pre-F & -0.99 & -1.30 & -1.11 & - & - & -2.13 & \textbf{-3.63} & +1.50 \\
    \midrule

    \multirow{3}{*}{\textbf{D11}} & \multirow{3}{*}{UMi} & \multirow{3}{*}{6.2}
     & Rec   & - & - & - & \textbf{-11.79} & - & -7.57 & -10.91 & -0.88 \\
     & & & Pre-T & -1.21 & -2.30 & -3.88 & - & 3.57 & -4.10 & \textbf{-4.47} & +0.37 \\
     & & & Pre-F & -1.54 & -2.22 & -2.72 & - & - & -3.34 & \textbf{-4.80} & +1.46 \\
    \midrule

    \multirow{3}{*}{\textbf{D12}} & \multirow{3}{*}{UMi} & \multirow{3}{*}{29.0}
     & Rec   & - & - & - & \textbf{-18.19} & - & -6.84 & -17.28 & -0.91 \\
     & & & Pre-T & -4.89 & -5.51 & -6.14 & - & 4.04 & -3.46 & \textbf{-7.68} & +1.54 \\
     & & & Pre-F & -8.58 & -10.20 & -11.13 & - & - & -4.21 & \textbf{-15.47} & +4.34 \\
    \midrule
    
    \multirow{3}{*}{\textbf{Avg}} & \multirow{3}{*}{-} & \multirow{3}{*}{-}
     & Rec   & - & - & - & -13.41 & - & -7.55 & \textbf{-14.74} & +1.33 \\
     & & & Pre-T & -5.36 & -6.03 & -7.10 & - & 2.89 & -4.60 & \textbf{-8.68} & +1.58 \\
     & & & Pre-F & -5.17 & -5.94 & -6.20 & - & - & -4.07 & \textbf{-9.34} & +3.14 \\

    \bottomrule
    \end{tabular}%
    }
\end{table*}

\noindent Apart from architectural comparisons, we further evaluate our framework under different training paradigms to demonstrate its robustness in handling heterogeneous CSI:

\begin{itemize}
  \item Sequential~\cite{goodfellow2013empirical}: The model is trained on each dataset one after another in a fixed order, typically prone to catastrophic forgetting.
  \item Alternating~\cite{stickland2019bert}: The model is updated on batches from each dataset in a fixed sequence, excluding any cross-dataset sample shuffling. This approach is similar to the pretraining method of WiFo~\cite{liu2025wifo}.
  \item Global~\cite{bottou2010large}: The standard training paradigm where all samples are uniformly shuffled across the entire global dataset, serving as a performance benchmark.
\end{itemize}
\noindent For brevity, we use “Proposed” to represent our approach.

\subsubsection{Performance Metrics}
Normalized Mean Squared Error (NMSE) is a widely used metric in channel prediction and signal processing to quantify the discrepancy between the predicted values and the ground truth~\cite{wen2018deep}. It is mathematically defined as:
\begin{equation}
\text{NMSE} = \frac{\| \mathbf{H} - \hat{\mathbf{H}} \|_F^2}{\| \mathbf{H} \|_F^2},
\end{equation}
where $\mathbf{H}$ denotes the true CSI tensor, $\hat{\mathbf{H}}$ represents the predicted CSI, and $\| \cdot \|_F$ indicates the Frobenius norm.
In our experiments, we report the NMSE in decibels (dB) to better evaluate the reconstruction quality, calculated as:
\begin{equation}
\text{NMSE (dB)} = 10 \log_{10} (\text{NMSE}).
\end{equation}

\begin{figure*}[!t]
    \centering
    \includegraphics[width=7in]{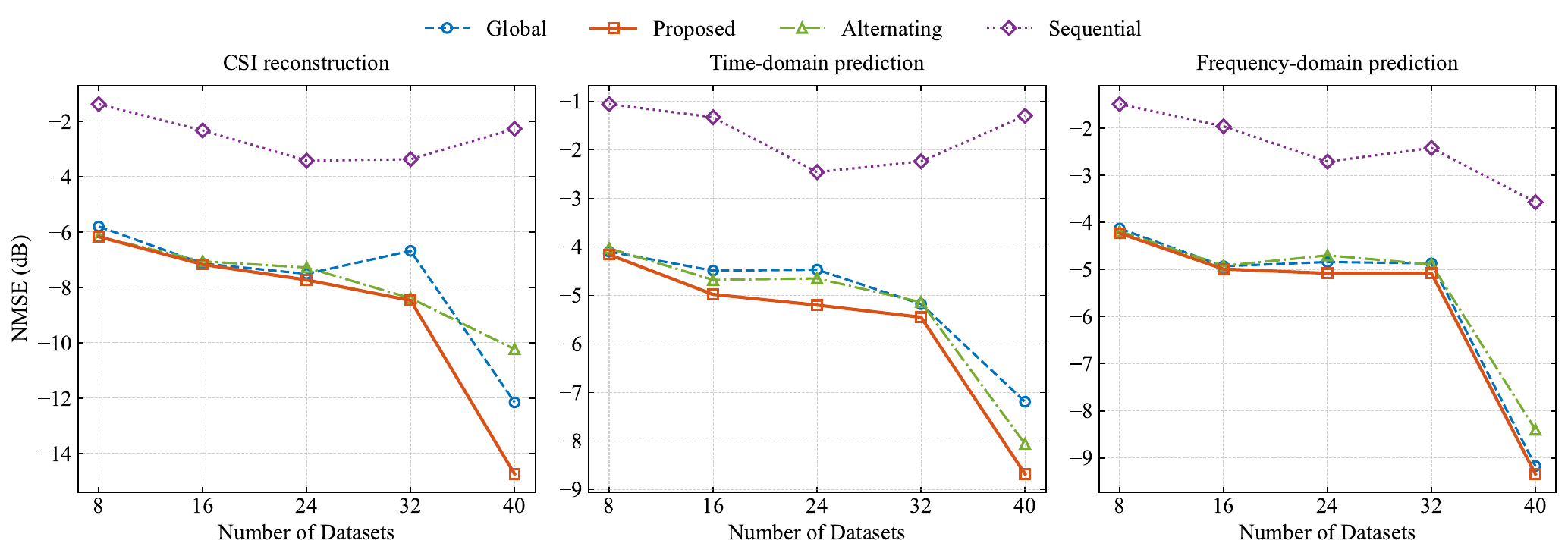}
    \caption{Zero-shot NMSE performance versus the number of training datasets for CSI reconstruction (left), time-domain prediction (middle), and frequency-domain prediction (right). The proposed method consistently outperforms baselines as data volume increases.}
    \label{fig:scalability}
    \end{figure*}

    \begin{table*}[t]
        \centering
        \caption{Efficiency Analysis: Padding Ratio vs. Training Time with Increasing Number of Datasets}
        \label{tab:efficiency_scaling}
        
        \renewcommand{\arraystretch}{1.25}
        
        \begin{tabular*}{\textwidth}{@{\extracolsep{\fill}}lc cc cc cc cc cc}
        \toprule
        \multirow{2.5}{*}{\textbf{Method}} & \multirow{2.5}{*}{\textbf{Bucket}} & \multicolumn{2}{c}{\textbf{8 Datasets}} & \multicolumn{2}{c}{\textbf{16 Datasets}} & \multicolumn{2}{c}{\textbf{24 Datasets}} & \multicolumn{2}{c}{\textbf{32 Datasets}} & \multicolumn{2}{c}{\textbf{40 Datasets}} \\
        
        \cmidrule(lr){3-4} \cmidrule(lr){5-6} \cmidrule(lr){7-8} \cmidrule(lr){9-10} \cmidrule(lr){11-12}
        
         & & \textbf{Pad (\%)} & \textbf{Time} & \textbf{Pad (\%)} & \textbf{Time} & \textbf{Pad (\%)} & \textbf{Time} & \textbf{Pad (\%)} & \textbf{Time} & \textbf{Pad (\%)} & \textbf{Time} \\
        \midrule
        
        Sequential~\cite{goodfellow2013empirical} & - & 0 & 2.03 & 0 & 4.11 & 0 & 4.88 & 0 & 6.74 & 0 & 9.6 \\
        Alternating~\cite{stickland2019bert} & - & 0 & 2.03 & 0 & 4.11 & 0 & 4.88 & 0 & 6.74 & 0 & 9.6 \\
        Global~\cite{bottou2010large} & - & 36.46 & 2.54 & 32.81 & 5.08 & 28.82 & 6.66 & 44.06 & 13.11 & 58.91 & 30.55 \\

        \midrule
        
        \multirow{2}{*}{Proposed} 
         & 4 & 4.69 & 2.05 & 5.15 & 4.12 & 14.61 & 5.54 & 19.05 & 8.6 & 25.2 & 14.41 \\
         & 8 & 0 & 2.02 & 2.27 & 4.09 & 7.69 & 5.15 & 8.66 & 7.37 & 13.58 & 11.61 \\
         
        \bottomrule
        \end{tabular*}
    \end{table*}

\subsection{Zero-shot Generalization}

We conducted a comprehensive evaluation on 12 distinct zero-shot datasets (D1--D12) to assess the generalization capability of the proposed \textit{HeterCSI} framework. As detailed in Table~\ref{tab:performance_comparison}, these datasets span a wide spectrum of communication scenarios (Indoor, RMa, UMa, UMi), carrier frequencies (1.8 GHz to 42 GHz). The baselines are categorized into full-shot schemes (LSTM, Transformer, LLM4CP, BERT4MIMO), which are trained individually on target domains, and generalization-oriented approaches. The latter includes PAD (a training-free analytical method) and WiFo (a pretrained zero-shot model), both of which operate without access to the target domain's training data.

The results demonstrate that HeterCSI establishes a generalized foundation model without scenario-specific finetuning, exhibiting comprehensive superiority across all tasks. Compared to the existing zero-shot baseline WiFo, our framework significantly reduces the average NMSE by 7.19~dB, 4.08~dB, and 5.27~dB for CSI reconstruction, time-domain prediction, and frequency-domain prediction, respectively. More notably, even when compared against the best-performing baselines (including domain-specific Full-shot models), our method achieves average NMSE reductions of 1.33~dB, 1.58~dB, and 3.14~dB across the three tasks. This robust performance is attributed to the proposed scale-aware adaptive batching strategy, which enables the model to effectively mine and align universal representations from massive heterogeneous CSI.

Detailed statistical analysis reveals that our zero-shot model rivals or surpasses domain-specific experts. Specifically, \textit{HeterCSI} achieves the best performance in 29 out of 36 evaluation instances (approx. 80.6\%) against all baselines. Crucially, in the few cases where large-scale Full-shot models (e.g., BERT4MIMO) perform slightly better, our framework demonstrates exceptional stability, consistently securing the top-2 position (100\% rate) across the entire benchmark. Furthermore, the performance gap in these non-optimal cases is marginal and strictly controlled within 1.80~dB. These results confirm that \textit{HeterCSI} can serve as a reliable wireless foundation model, delivering optimal or near-optimal performance without the need for scenario-specific retraining.


\subsection{Scalability of Cross-Scale Pretraining}

Fig.~\ref{fig:scalability} illustrates the zero-shot NMSE performance trends as the training datasets expand from 8 to 40. A consistent scaling law is observed, where the generalization capability improves monotonically with data volume. However, the \textit{Global} strategy exhibits significant optimization instability, particularly in the CSI reconstruction task. This is evidenced by a distinct performance degradation (an anomalous rise in NMSE) at 32 datasets, attributed to destructive gradient conflicts arising from the naive mixing of heterogeneous scales. In contrast, the \textit{Proposed} framework maintains a robust generalization trajectory, effectively mitigating these conflicts. When using 40 datasets, our method establishes substantial performance advantages of approximately 2.8~dB and 1.6~dB in CSI reconstruction and time-domain prediction, respectively, while the improvement in frequency-domain prediction is 0.2 dB. The results indicate that the two core tasks, CSI reconstruction and time-domain prediction, are more susceptible to gradient conflicts induced by scale heterogeneity. Our approach effectively addresses this issue.

Furthermore, the proposed method demonstrates distinct advantages over the \textit{Sequential} and \textit{Alternating} baselines. While both approaches avoid padding-induced overhead, they fail to scale effectively with increasing data volume. The \textit{Sequential} strategy suffers from severe catastrophic forgetting, rendering it non-scalable with consistently high error levels across all tasks. The \textit{Alternating} approach fails to capture universal channel representations, causing it to stagnate at a significantly higher error floor compared to our method. Specifically, in the data-rich regime (40 datasets), the \textit{Proposed} framework consistently outperforms the \textit{Alternating} baseline, achieving NMSE reductions of 4.6dB, 0.7dB, and 0.9~dB for CSI reconstruction, time-domain prediction, and frequency-domain prediction, respectively. This comprehensive superiority underscores the necessity of our bucket-based shuffling, which ensures the model is exposed to global scenario diversity while maintaining local scale consistency.

Table~\ref{tab:efficiency_scaling} presents a quantitative analysis of padding ratios and training latency across increasing dataset volumes. Since padding tokens consume computational resources without contributing valid information, the padding ratio is directly correlated with training time, making our objective of minimizing padding tokens critical for system scalability. As shown in the table, the \textit{Global} strategy incurs a super-linear increase in computational costs; at 40 datasets, the padding ratio surges to 58.91\%, causing the training time to escalate to 30.55. In contrast, the \textit{Proposed} framework effectively suppresses this overhead through bucket-based shuffling, maintaining a near-linear growth trajectory. Specifically, with a bucket granularity of $B=8$, our method restricts the padding ratio to 13.58\% even at the largest scale, reducing training time by approximately 52.83\% compared to the \textit{Global} strategy (14.41 vs. 30.55). This demonstrates that our framework successfully achieves high computational efficiency comparable to the non-padded \textit{Alternating} method (9.6), while retaining the generalization benefits of cross-dataset shuffling.


\begin{figure}[!t]
    \centering
    \includegraphics[width=3in,keepaspectratio]{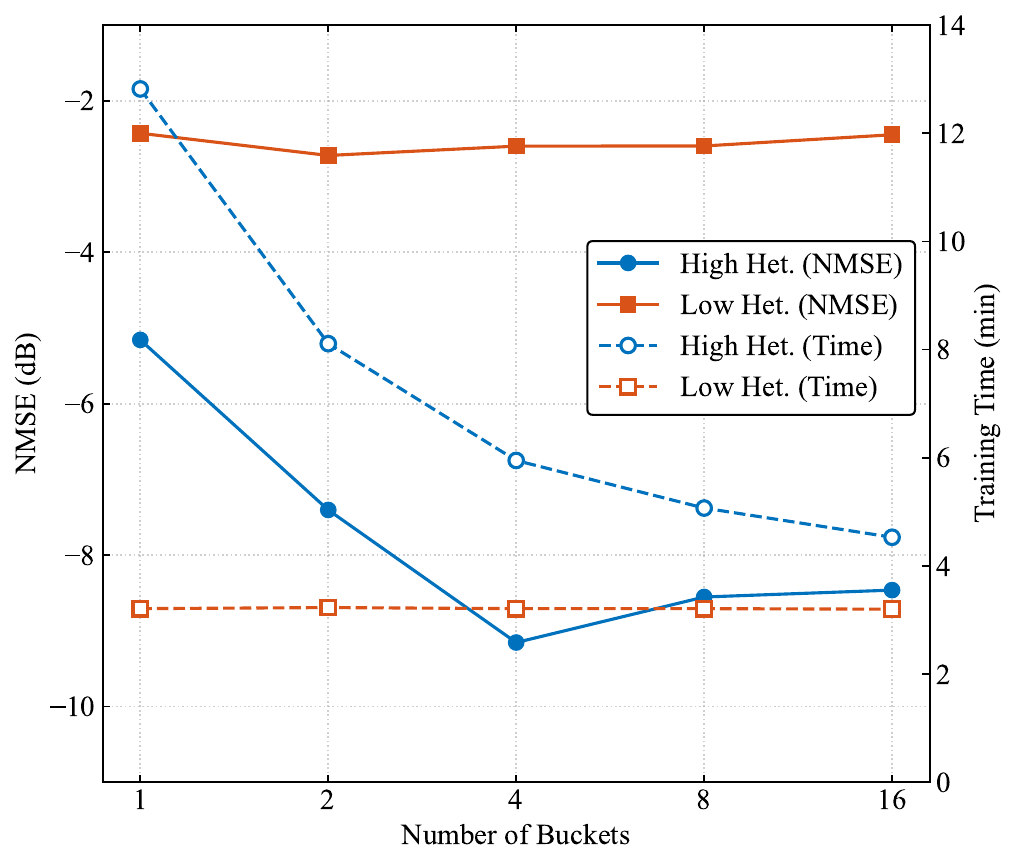}
    \caption{Impact of bucket granularity on zero-shot NMSE performance and training time under different heterogeneity levels.}
    \label{fig:heter}
\end{figure}


\subsection{Impact of Bucket Granularity and Heterogeneity}

Fig.~\ref{fig:heter} illustrates the impact of bucket granularity ($B$) on zero-shot NMSE performance under different levels of dataset heterogeneity. The model trained on highly heterogeneous data demonstrates a substantial performance advantage over the model trained on homogeneous data. For instance, at a bucket granularity of $B=4$, the high-heterogeneity setting achieves NMSE of approximately $-9.1$~dB, whereas the low-heterogeneity counterpart stagnates at $-2.6$~dB, yielding a significant performance gap of $6.5$~dB. This disparity provides empirical evidence that pre-training solely on datasets with limited diversity or similar scales fails to cultivate the robust feature representations necessary for zero-shot generalization to unseen scenarios. In contrast, leveraging large-scale heterogeneous CSI allows the model to learn universal channel properties that remain valid across diverse environments.

Furthermore, the impact of bucket resolution on training latency reveals a critical mechanism for resolving the \textit{efficiency-generalization dilemma}. As depicted by the dashed curves, the high-heterogeneity scenario suffers from excessive padding overhead at $B=1$, resulting in a training time of $12.5$~min. However, increasing $B$ to $4$ drastically reduces this overhead, cutting the training time by approximately $52\%$ to $6.0$~min, while the low-heterogeneity case maintains a consistent low latency ($\approx 3.3$~min). This observation, combined with the NMSE trends, highlights an optimal trade-off: high-heterogeneity scenarios benefit from a finer partition ($B=4$) to balance feature granularity and optimization efficiency, whereas low-heterogeneity data favors coarser quantization (optimal at $B=2$). The existence of these distinct optima confirms that the proposed framework successfully reconciles the conflict between minimizing padding overhead and maintaining gradient diversity. By utilizing $B$ as a tunable control knob, the framework effectively mitigates scale-induced gradient conflicts without sacrificing the stochasticity required to prevent overfitting, thereby ensuring robust generalization even in highly complex channel environments.


\section{Conclusion}
This paper presents HeterCSI, a channel-adaptive pretraining framework designed to reconcile the trade-off between optimization stability and generalization in wireless foundation models. By implementing a scale-aware adaptive batching strategy alongside a double masking mechanism, the proposed approach effectively mitigates destructive gradient interference caused by scale heterogeneity while preserving the constructive benefits of scenario diversity. Empirical validation across 12 unseen scenarios confirms the superior zero-shot performance of our framework, outperforming full-shot baselines on average. Specifically, compared to the state-of-the-art zero-shot benchmark WiFo, HeterCSI yields average NMSE reductions of 7.19~dB, 4.08~dB, and 5.27~dB for CSI reconstruction, time-domain prediction, and frequency-domain prediction tasks, respectively. In future work, we will continue to exploit the intrinsic properties of CSI, by investigating physics-aware positional encodings and specialized model architectures to enhance representation learning in dynamic wireless environments.




\bibliographystyle{IEEEtran}
\bibliography{cite}






\end{document}